\newcommand{\multiline}[1]{%
  \begin{tabularx}{\dimexpr\linewidth-\ALG@thistlm}[t]{@{}X@{}}
    #1
  \end{tabularx}
}
\newcommand{\trace}{\textit{tr}}
\newcommand{\E}{\mathbb{E\,}}
\newcommand{\R}{\mathbb{R}}
\newtheorem{assumption}{Assumption}
\newtheorem{lemma}{Lemma}
\newtheorem{remark}{Remark}
\newtheorem*{theorem}{Theorem}
\newtheorem*{corollary}{Corollary}
\title{\LARGE \bf
Actively Learning Reinforcement Learning: A Stochastic Optimal Control Approach}
\author{Mohammad S. Ramadan$^1$,\,Mahmoud A. Hayajnh$^{2}$,\,Michael T. Tolley$^3$,\,Kyriakos G. Vamvoudakis$^2$
\thanks{
This work was supported in part, by Minerva under grant No. N$00014-18-1-2874$, by NSF under grant Nos. CAREER CPS-$1851588$,   CPS-$2227185$, S\&AS-$1849198$, and SATC-$2231651$, and by the Onassis Foundation-Scholarship ID: F ZQ $064-1/2020-2021$.}
\thanks{$^{1}$ Mathematics and Computer Science Division, Argonne National Laboratory, Lemont, IL 60439, USA, {\tt\small mramadan@anl.gov}.}
\thanks{$^{2}$ The Daniel Guggenheim School of Aerospace Engineering, Georgia Institute of Technology, GA 30332-0150 USA,  {\tt\small mhayajnh3@gatech.edu, kyriakos@gatech.edu}.}
\thanks{$^3$ Department of Mechanical and Aerospace Engineering, University of California, San Diego, CA 92161 USA,  {\tt\small tolley@ucsd.edu}.}}
\begin{document}
\maketitle
\thispagestyle{empty}
\pagestyle{empty}

\begin{abstract}
In this paper we propose a framework towards achieving two intertwined objectives: (i) equipping reinforcement learning with active exploration and deliberate information gathering, such that it regulates state and parameter uncertainties resulting from modeling mismatches and noisy sensory; and (ii) overcoming the computational intractability of stochastic optimal control. We approach both objectives by using reinforcement learning to compute the stochastic optimal control law. On one hand, we avoid the curse of dimensionality prohibiting the direct solution of the stochastic dynamic programming equation. On the other hand, the resulting stochastic optimal control reinforcement learning agent admits caution and probing, that is, optimal online exploration and exploitation. Unlike fixed exploration and exploitation balance, caution and probing are employed automatically by the controller in real-time, even after the learning process is terminated. We conclude the paper with a numerical simulation, illustrating how a Linear Quadratic Regulator with the certainty equivalence assumption may lead to poor performance and filter divergence, while our proposed approach is stabilizing, of an acceptable performance, and computationally convenient.
\end{abstract}

\section{Introduction}
The significance and proliferation of Reinforcement Learning (RL) across various disciplines are by now self-evident \cite{sutton2018reinforcement}. RL is an umbrella of algorithms that are rooted in the concept of stochastic approximation \cite{bertsekas2012dynamic,sutton2018reinforcement,recht2019tour}. At its core, it tries to solve an optimal control problem through maximizing some notion of a cumulative reward. Yet, with this promise of RL algorithms, a tough challenge arises when applying learned policies to real-world applications \cite{henderson2018deep}. These policies, trained in lab simulations or controlled environments, may in practice suffer a degradation in performance or exhibit an unsafe behavior \cite{recht2019tour}. This stems from modeling mismatches and discrepancies between the training environment and real-world conditions.

\begin{figure}[h]
    \centering
    \begin{footnotesize}
    \scalebox{.9}{
\begingroup%
  \makeatletter%
  \providecommand\color[2][]{%
    \errmessage{(Inkscape) Color is used for the text in Inkscape, but the package 'color.sty' is not loaded}%
    \renewcommand\color[2][]{}%
  }%
  \providecommand\transparent[1]{%
    \errmessage{(Inkscape) Transparency is used (non-zero) for the text in Inkscape, but the package 'transparent.sty' is not loaded}%
    \renewcommand\transparent[1]{}%
  }%
  \providecommand\rotatebox[2]{#2}%
  \newcommand*\fsize{\dimexpr\f@size pt\relax}%
  \newcommand*\lineheight[1]{\fontsize{\fsize}{#1\fsize}\selectfont}%
  \ifx\svgwidth\undefined%
    \setlength{\unitlength}{216bp}%
    \ifx\svgscale\undefined%
      \relax%
    \else%
      \setlength{\unitlength}{\unitlength * \real{\svgscale}}%
    \fi%
  \else%
    \setlength{\unitlength}{\svgwidth}%
  \fi%
  \global\let\svgwidth\undefined%
  \global\let\svgscale\undefined%
  \makeatother%
  \begin{picture}(1,0.83333333)%
    \lineheight{1}%
    \setlength\tabcolsep{0pt}%
    \put(0,0){\includegraphics[width=\unitlength]{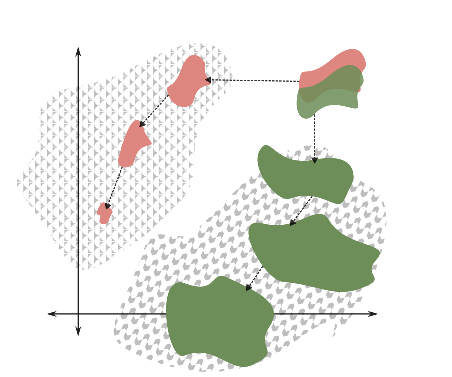}}%
    \put(0.06635696,0.77987718){\color[rgb]{0,0,0}\makebox(0,0)[lt]{\lineheight{1.25}\smash{\begin{tabular}[t]{l}high observability region\\(better/more sensors and/or higher SNR)\end{tabular}}}}%
    \put(0.66864614,0.06233861){\color[rgb]{0,0,0}\makebox(0,0)[lt]{\lineheight{1.25}\smash{\begin{tabular}[t]{l}low observability region\end{tabular}}}}%
    \put(0.03950029,0.05433946){\color[rgb]{0,0,0}\makebox(0,0)[lt]{\lineheight{1.25}\smash{\begin{tabular}[t]{l}state-space\end{tabular}}}}%
    \put(0.74868936,0.75181574){\color[rgb]{0,0,0}\makebox(0,0)[lt]{\lineheight{1.25}\smash{\begin{tabular}[t]{l}state uncertainty\end{tabular}}}}%
    \put(0.18307156,0.15470239){\color[rgb]{0,0,0}\makebox(0,0)[lt]{\lineheight{1.25}\smash{\begin{tabular}[t]{l}$o$\end{tabular}}}}%
  \end{picture}%
\endgroup%
}
    \end{footnotesize}
    \caption{\textbf{A graphical overview} of stochastic optimal control: the controller is not only concerned with regulating the state estimate (mean, mode, ... etc), but also regulating the state uncertainty (or the state estimate quality) via driving the system through high observable regions, for instance, regions of better signal-to-noise ratio (SNR) and/or of more/better sensors. The green and pink regions correspond to state uncertainty propagation along two different trajectories. The trajectory in pink, resembling a trajectory under stochastic optimal control, takes into consideration regulating uncertainty, and hence, in its path to the origin, it chooses the path of higher observability.} \label{fig:enter-label} \vskip-8mm
\end{figure}

Early work in the RL community acknowledged the need for stochastic policies when the agent has limited or distorted access to the states \cite{jaakkola1994reinforcement}. The randomness introduced by stochastic policies diversifies chosen actions and hence achieves ``artificial exploration'' in a sense analogous to persistence of excitation in control theory and system identification \cite{marafioti2014persistently}. While this method diversifies actions to enhance learning, it may also compromise system safety and stability. Stochastic optimal control (SOC) \cite[ch.~25]{doucet2001sequential}, on the other hand, also known as dual control \cite{tse1973wide}, employs an automatic optimal exploration vs exploitation balance through employing two related key behaviors: caution and probing \cite{kumar2015stochastic}. Caution accounts for uncertainty when achieving safety and improving performance, while probing, often conflicting with caution, aims at gathering information about the system and reduce its uncertainties. These concepts play a central role in ensuring safety while enhancing the system's learning capabilities and state observability. However, in general, SOC is computationally prohibitive, except for the simplest cases \cite{aastrom1986dual}, due to the curse of dimensionality, which is further complicated by the the expectations over all sources of uncertainty.

The potential of both RL and SOC is limited by their inherent challenges discussed above. In this paper, we try to utilize each to mitigate the limitation of the other. Specifically, the modeling mismatch problems inherent to RL can be alleviated by the caution and probing effects of SOC. Caution imposes restrictions on the RL agent behavior under uncertainty and modelling mismatch, acting as a safeguard against false perception. Probing, on the other hand, aims to correct the modelling uncertainty and to create an accurate perception of the environment. In parallel to all of that, RL, possibly together with a neural net \cite{lillicrap2015continuous} as a function approximator, can mitigate the computational burden of SOC. These hypothesized mutual benefits serve as the motivation for the work we present here.

We target partially observed stochastic nonlinear systems with differentiable dynamics and quadratic cost functions. We show that the cost can be parameterized by the first two moments of the state, which we use the extended Kalman filter (EKF) to approximate. These EKF's approximate two moments as the state are then used as the new state used in the definition of the reinforcement learning agent, while the measurement plays the role of the external disturbance. We then employ the Deep Deterministic Policy Gradient algorithm \cite{lillicrap2015continuous} to solve the resulting reinforcement learning problem. We conclude the paper with a numerical simulation, illustrating how a Linear Quadratic Regulator with the certainty equivalence assumption may lead to poor performance and filter divergence, while our proposed approach is stabilizing, of an acceptable performance, and computationally convenient.

\section{Problem Formulation} \label{section: ProblemFormulation} 
Consider a class of discrete-time nonlinear systems described by, 
\begin{subequations}
\begin{align}
x_{k+1}&=f(x_k,u_k)+w_k,\label{NonSys_a}\\
y_k&=h(x_k)+v_k,\label{NonSys_b}
\end{align}\label{NonSys}
\end{subequations}
where $x_k\in\mathbb R^{r_x}$ is the state vector, $u_k\in\mathbb R^{r_u}$ is the control input, $y_k\in\mathbb R^{r_y}$ is the output signal, and $w_k\in\mathbb R^{r_x}$, $v_k\in\mathbb R^{r_y}$ are exogenous disturbances. The functions $f$ and $h$ are known and differentiable in $x_k$, and $f$ is twice differentiable in $u_k$. The disturbances $v_k$ and $w_k$ each is white, identically distributed with a continuous density of zero mean and positive definite covariances $\Sigma_w \succ 0$ and $\Sigma_v \succ 0$, respectively. These disturbances are independent from each other and from $x_0$, the initial state, which has a continuous density $\pi_{0\mid 0}$ of mean $x_{0 \mid 0}$ and covariance $\Sigma_{0 \mid 0} \succ 0$.

The goal is to construct a causal control law, i.e., a control law that is only dependent upon the data accessible up until the moment of evaluating the control action, or, $u_k = u_k (\mathcal Z _k)$, where $\mathcal{Z}_k=\{y_0,\hdots,y_k,u_0,\hdots,u_{k-1},\pi_{0 \mid 0}\}$. This law has to minimize the cost functional
\begin{align}\label{StochasticOptimalControlCost}
    J_N = \E \left \{ \gamma^N x_N^\top Q x_N \sum_{k=0}^{N-1} \gamma^k \left [x_k^\top Q x_k + u_k^\top R u_k\right ]\right\},
\end{align}
where $\gamma \in (0,1)$ is the discount factor, the inputs are restricted $u_k \in \mathbb U$, where $\mathbb U$ is a bounded set, and the weighting matrices $Q \succeq 0,\, R \succ 0$. The expectation is taken with respect to all the random variables, i.e., $x_0$, $w_k$, and $v_k$, for all $k$.

\begin{assumption} \label{Assumption_BIBO}
(Bounded input bounded state condition): The disturbance $w_k \subset \mathbb W \subset \R^{r_x}$ and the control input $u_k \in \mathbb U\subset \R^{r_u}$ belonging to the bounded sets $\mathbb W,\,\mathbb U$, result in an invariant compact set $\mathbb X$, i.e., $x_0 \in \mathbb X \implies x_k \in \mathbb X,\,\forall k \geq 0$.
\end{assumption}

\begin{remark}
In general nonlinear systems, as in \eqref{NonSys_a}, it is immaterial to separate states from unknown or time-varying parameters. Using the concept of the state augmentation \cite[p.~281]{jazwinski2007stochastic} the parameters of a system can be augmented in the state vector, which, in contrast to linear systems, augmentation for nonlinear systems does not fundamentally alter the structure of the system; it is nonlinear both ways. Therefore, we make no distinction between states and parameters in the following sections, hence, regulating state uncertainty encompasses parameter learning and modelling mismatch reduction \cite{aastrom1971problems}.
\end{remark}

\section{Background: stochastic optimal control}
The vector $x_k$ in \eqref{NonSys_a} retains its Markovian property due to the whiteness of $w_k$. Moreover, the observation $y_k$ is conditionally independent when conditioned on $x_k$; $v_k$ in \eqref{NonSys} is also white. These assumptions are typical in partially observable Markov decision processes \cite{bertsekas2012dynamic}. Under these conditions, the state $x_k$ cannot be directly accessed; it can only be inferred through the observation $y_k$, which is typically a lower order and/or distorted version of $x_k$. The vector $x_k$ is only a state in the Markovian sense, that is $$ p(x_{k+1}\mid x_k,x_{k-1},\hdots,x_0,u_k,\hdots,u_0) 
    = p(x_{k+1}\mid x_k,u_k).$$
However, for a decision maker or a control designer (or the learner as in \cite{jaakkola1994reinforcement}), an alternative ``state'' is required. That is, from a practical standpoint, a ``state'' in the sense of the minimal accessible piece of information adequate to reason about the future state trajectories. A possible ``state'' in this sense is the \textit{information state}, which is the state filtered density function $\pi_{k \mid k}=p(x_k \mid \mathcal{Z}_k)$ \cite{kumar2015stochastic}. As an ``informative statistic'' \cite{striebel1965sufficient}, it is sufficiently informative to enable the prediction of possible density propagation trajectories, and hence, can be used in the construction of the cost function. However, the information state, as a density function, is infinite dimensional in general, which renders its applicability infeasible, computationally. 

\subsection{Separation}

Adopting the information state $\pi_{k \mid k}$, a causal controller has the form $u_k = u_k (\pi _{k \mid k})$. This formulation of the control law allows the interpretation of SOC as comprising two distinct steps \cite[ch.~25]{doucet2000sequential}:
\begin{enumerate}
    \item Tracking $\pi_{k \mid k}$, that is, a Bayesian filter that propagates the information state \cite{kumar2015stochastic}.
    \item A law that assigns a value $u_k$ to each information state provided by the filter, such that this law minimizes \eqref{StochasticOptimalControlCost}.
\end{enumerate}

In the linear Gaussian state-space model case, the information state takes an equivalent finite dimensional characterization: the state conditional mean and covariance. If the system is unconstrained, the optimal control is indifferent to the state covariance and is only a function of the mean. This explains the separation principle in LQG control design \cite[ch.~8]{aastrom2012introduction}. This separation principle differs from that in the realm of SOC. The latter strictly denotes the two-step interpretation listed above.

As pointed out by \cite{tse1973wide}, tracking the information state $\pi_{k \mid k}$ is not half of the SOC problem; a convenient approximation to the Bayesian filter is typically less cumbersome than finding the SOC. The next subsection is a brief introduction to the Bayesian filter, which is then used to construct the dynamic programming equation for the stochastic case.

\subsubsection{The Bayesian Filter}
The equivalent stochastic representation of system \eqref{NonSys}: the transition kernel $x_{k+1} \sim p(x_{k+1} \mid x_k, u_k)$ and the measurement likelihood $y_k \sim p(y_k \mid x_k)$, can be achieved, similar to \cite{schon2011system}, due to the whiteness of $w_k$ and $v_k$.

The information state can be propagated through the Bayesian filter, which consists of the following two steps: the time update 
\begin{align}
p(x_{k+1}\mid u_k,\mathcal Z_{k}) &= \nonumber \\
& \hskip -10mm\int{p(x_{k+1}\mid u_k,x_k)p(x_k\mid \mathcal Z_{k})\,\textrm{d}x_k}, \label{T_update}
\end{align}
and the measurement update
\begin{align}
p(x_{k+1}\mid \mathcal Z_{k+1})&= \nonumber \\
& \hskip -15mm \frac{p(y_{k+1}\mid x_{k+1})p(x_{k+1}\mid u_k, \mathcal Z_{k})}{\int{p(y_{k+1}\mid  x_{k+1})p(x_{k+1}\mid u_k, \mathcal Z_{k})\,\textrm{d}x_{k+1}}}. \label{M_update}
\end{align}

Notice that to move from the filtered density at time-$k$ to $k+1$, the values of $u_k$ and $y_{k+1}$ are used. To simplify the notation, we denote $\pi_{k\mid k}=p(x_k\mid \mathcal Z_{k})$ and $\pi_{k+1\mid k}=p(x_{k+1}\mid u_k, \mathcal Z_{k})$, and define the mapping 
\begin{equation} \label{infoStateDynamics}
    \pi_{k+1\mid k+1}=T(\pi_{k\mid k},u_k,y_{k+1}),
\end{equation}
where $T$ maps $\pi_{k\mid k}$ to $\pi_{k+1\mid k}$ using $u_k$ in \eqref{T_update}, then to $\pi_{k+1\mid k+1}$ using $y_{k+1}$ in \eqref{M_update}.

\subsubsection{Stochastic Dynamic Programming}
A causal control law uses only the available information up to the moment of evaluating this law.  In accordance with the principle of optimality, when making the final control decision, denoted as $u_{N-1}$, and given the available information then $\mathcal{Z}_{N-1}=\{y_0,\hdots,y_{N-1},u_0,\hdots,u_{N-2},\pi_{0 \mid 0}\}$, the optimal cost (value function) can be determined as follows
\begin{align*}
V_{N-1}=\min_{u_{N-1} \in \mathbb{U}} \E \Big \{&x_{N-1}^\top Q x_{N-1} + u_{N-1}^\top R u_{N-1} + \\
&\hskip -7mm \gamma x_{N}^\top Q x_{N} \mid u_{N-1}, \mathcal Z_{N-1} \Big\},
\end{align*}
where the expectation is with respect to $x_{N-1}$ and $w_{N-1}$. Notice in the above expression, when $x_{N-1}$ is random, the disturbance $w_{N-1}$ is marginalized over through the expectation, and $u_{N-1}$ is the decision variable of the minimization, hence the expression is solely a function of the information state $\pi_{N-1 \mid N-1}$. That is,
\begin{align}
    &V_{N-1}(\pi_{N-1\mid N-1})= \nonumber\\
    &\min_{u_{N-1} \in \mathbb{U}} \E \Big \{x_{N-1}^\top Q x_{N-1} + u_{N-1}^\top R u_{N-1} + \nonumber \\
&\hskip 7mm \gamma x_{N}^\top Q x_{N} \mid u_{N-1}, \mathcal Z_{N-1} \Big\}. \label{eq:sDP_1}
\end{align}

Notice that if we define $V_N(\pi_{N\mid N})= \E \left \{ x_{N}^\top Q x_{N}\mid \mathcal Z_N\right\}$, it is not obvious how to put the stochastic dynamic programming equation above in a backward recursion form. For this purpose, we rely on the following result.

\begin{lemma} \label{lemma:smoothing}
(Smoothing theorem \cite[ch.~10]{resnick2019probability}): Let $(\Omega, \mathcal A, \mathbf{P})$ be a probability space, and $\chi:\Omega \to \R$ a measurable and $L^1$ function, i.e., $\E_{\mathbf{P}} | \chi | < \infty$, where $\E_{\mathbf{P}}$ is the expectation corresponding to $\mathbf{P}$. Let the $\sigma-$algebras $\mathcal{A}_0 \subset \mathcal{A}_1 \subset \mathcal{A}$, then
\begin{align*}
\E_{\mathbf{P}} \left \{ \chi \mid \mathcal{A}_1 \right \} = \E_{\mathbf{P}} \left \{ \E_{\mathbf{P}} \left \{ \chi\mid \mathcal{A}_0 \right \} \mid \mathcal{A}_1 \right \}. \qed
\end{align*}
\end{lemma}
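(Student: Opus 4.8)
The plan is to establish the identity directly from the defining characterization of conditional expectation together with its $\mathbf P$-almost-sure uniqueness, keeping the two nested $\sigma$-algebras explicit throughout. Abbreviate the right-hand side by $Y := \E_{\mathbf P}\{\E_{\mathbf P}\{\chi\mid\mathcal A_0\}\mid\mathcal A_1\}$ and its inner factor by $Z := \E_{\mathbf P}\{\chi\mid\mathcal A_0\}$, so that $Y = \E_{\mathbf P}\{Z\mid\mathcal A_1\}$. Recall that $\E_{\mathbf P}\{\chi\mid\mathcal A_1\}$ is the $\mathbf P$-almost surely unique random variable that is (i) $\mathcal A_1$-measurable and (ii) satisfies $\int_A \E_{\mathbf P}\{\chi\mid\mathcal A_1\}\,d\mathbf P = \int_A \chi\,d\mathbf P$ for every $A\in\mathcal A_1$. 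The strategy is to verify that $Y$ possesses both of these properties; uniqueness then forces $Y=\E_{\mathbf P}\{\chi\mid\mathcal A_1\}$ almost surely, which is exactly the stated equality.

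First I would dispatch the measure-theoretic preliminaries. The hypothesis $\E_{\mathbf P}|\chi|<\infty$, i.e.\ $\chi\in L^1(\mathbf P)$, guarantees through the contraction property of conditional expectation that $Z\in L^1$ and then that $Y\in L^1$, so both conditional expectations are well defined; this is the sole place the $L^1$ assumption is needed. Property (i) is then immediate: $Y=\E_{\mathbf P}\{Z\mid\mathcal A_1\}$ is $\mathcal A_1$-measurable by the very definition of conditioning on $\mathcal A_1$. Hence the entire content of the proof lies in property (ii).

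Next I would fix an arbitrary test set $A\in\mathcal A_1$ and peel off the outer conditioning. Since $A\in\mathcal A_1$, the defining relation of $\E_{\mathbf P}\{\cdot\mid\mathcal A_1\}$ gives $\int_A Y\,d\mathbf P = \int_A Z\,d\mathbf P$ at once. What remains is to match $\int_A Z\,d\mathbf P$ with $\int_A\chi\,d\mathbf P$, using the defining relation of the inner object $Z=\E_{\mathbf P}\{\chi\mid\mathcal A_0\}$ together with the nesting $\mathcal A_0\subset\mathcal A_1\subset\mathcal A$ that anchors both conditioning levels to the common space $(\Omega,\mathcal A,\mathbf P)$. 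If this matching holds for every $A\in\mathcal A_1$, then $\int_A Y\,d\mathbf P=\int_A\chi\,d\mathbf P$ for all such $A$, property (ii) is verified, and the proof closes by uniqueness.

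I expect this matching step to be the main obstacle, and it is precisely where the relative fineness of the two algebras must be scrutinized rather than treated as bookkeeping. The defining averaging identity of $Z$ is guaranteed only for test sets drawn from the coarser algebra $\mathcal A_0$, whereas here $A$ ranges over the larger collection $\mathcal A_1$; bridging this gap is the crux of the argument. Concretely, the step hinges on how the $\mathcal A_0$-measurability of $Z$ propagates through the finer conditioning, in particular on the fact that $Z$, being $\mathcal A_0$-measurable and hence $\mathcal A_1$-measurable under $\mathcal A_0\subset\mathcal A_1$, is left invariant by $\E_{\mathbf P}\{\cdot\mid\mathcal A_1\}$. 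It is this invariance, rather than any further computation, that the argument must ultimately exploit to pin down $Y$ and deliver the claimed equality, so the delicate point to check carefully is exactly the interaction between the coarseness of $\mathcal A_0$ and the $\mathcal A_1$-test sets.
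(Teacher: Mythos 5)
Your blueprint --- characterize $\E_{\mathbf P}\{\chi\mid\mathcal A_1\}$ by $\mathcal A_1$-measurability plus the averaging identity over test sets, then invoke almost-sure uniqueness --- is the standard one, but the ``matching step'' you correctly isolate as the crux is not a technical obstacle awaiting a bridge: it fails outright, because the identity as typeset in the lemma is false. Your own closing observation shows why. Since $Z=\E_{\mathbf P}\{\chi\mid\mathcal A_0\}$ is $\mathcal A_0$-measurable and $\mathcal A_0\subset\mathcal A_1$, it is $\mathcal A_1$-measurable, so the invariance you appeal to gives $Y=\E_{\mathbf P}\{Z\mid\mathcal A_1\}=Z$ almost surely. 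The right-hand side of the lemma therefore equals $\E_{\mathbf P}\{\chi\mid\mathcal A_0\}$, and the claimed equality would force $\E_{\mathbf P}\{\chi\mid\mathcal A_1\}=\E_{\mathbf P}\{\chi\mid\mathcal A_0\}$ a.s., which fails whenever $\chi$ carries $\mathcal A_1$-information beyond $\mathcal A_0$. Concretely: take $\Omega=\{0,1\}$ with the fair-coin measure, $\mathcal A_0=\{\emptyset,\Omega\}$, $\mathcal A_1=\mathcal A=2^\Omega$, and $\chi(\omega)=\omega$; then the left side is $\chi$ while the right side is the constant $1/2$. Equivalently, for $A\in\mathcal A_1\setminus\mathcal A_0$ there is simply no reason that $\int_A Z\,d\mathbf P=\int_A\chi\,d\mathbf P$, and in general it is false --- so property (ii) cannot be verified, and no amount of care about ``the interaction between the coarseness of $\mathcal A_0$ and the $\mathcal A_1$-test sets'' will rescue it.

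For context, the paper offers no proof of this lemma (it is stated with a citation to Resnick), and the statement as printed has the roles of the two $\sigma$-algebras swapped: the version actually used in the corollary that follows it --- with $\mathcal Z_{N-1}$ coarser and $\mathcal Z_N$ finer, the outer conditioning on the coarser --- is the tower property $\E_{\mathbf P}\{\chi\mid\mathcal A_0\}=\E_{\mathbf P}\left\{\E_{\mathbf P}\{\chi\mid\mathcal A_1\}\mid\mathcal A_0\right\}$. For that corrected statement your strategy goes through verbatim and the crux dissolves: the test sets $A$ now range over $\mathcal A_0$, and since $\mathcal A_0\subset\mathcal A_1$ every such $A$ also lies in $\mathcal A_1$, whence $\int_A\E_{\mathbf P}\left\{\E_{\mathbf P}\{\chi\mid\mathcal A_1\}\mid\mathcal A_0\right\}d\mathbf P=\int_A\E_{\mathbf P}\{\chi\mid\mathcal A_1\}\,d\mathbf P=\int_A\chi\,d\mathbf P$, and uniqueness finishes the argument. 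So you chose the right proof skeleton but aimed it at a misstated target; the inclusion $\mathcal A_0\subset\mathcal A_1$ must be exploited on the test sets (coarse sets are also fine-measurable), not through the invariance of $Z$ under the finer conditioning --- which, as shown above, instead disproves the literal claim.
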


\begin{corollary} \label{cor:smoothing_thm}
The expression
\begin{align*}
    \E \left \{ x_N^\top Q x_N \mid u_{N-1},\mathcal{Z}_{n-1} \right \} & \\
    &\hskip-25mm=\E \left \{\E \left \{ x_N^\top Q x_N \mid \mathcal{Z}_N \right \} \mid u_{N-1},\mathcal{Z}_{n-1}\right \},\\
    &\hskip-25mm=\E \left \{ V_N(\pi_{N \mid N})\mid u_{N-1},\mathcal{Z}_{n-1}\right \}. \qed
\end{align*}
\end{corollary}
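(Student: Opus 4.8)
The plan is to establish the two displayed equalities separately: the first is a direct application of the smoothing theorem (Lemma~\ref{lemma:smoothing}), and the second is nothing more than the definition $V_N(\pi_{N\mid N}) = \E\{x_N^\top Q x_N \mid \mathcal Z_N\}$ introduced just above. Before invoking the lemma I would first discharge its integrability hypothesis. Taking $\chi = x_N^\top Q x_N$, Assumption~\ref{Assumption_BIBO} confines $x_N$ to the compact invariant set $\mathbb X$, so $\chi$ is bounded and hence $L^1$; every conditional expectation appearing below is therefore well defined and finite.

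For the first equality the essential point is to pin down the two nested $\sigma$-algebras correctly. I would set $\mathcal A_{\mathrm{coarse}} = \sigma\{u_{N-1}, \mathcal Z_{N-1}\}$ and $\mathcal A_{\mathrm{fine}} = \sigma(\mathcal Z_N)$. Since $\mathcal Z_N = \{y_0,\dots,y_N,u_0,\dots,u_{N-1},\pi_{0\mid 0}\}$ contains every element of $\mathcal Z_{N-1}$ together with $u_{N-1}$, and in addition the fresh measurement $y_N$, we have $\mathcal A_{\mathrm{coarse}} \subset \mathcal A_{\mathrm{fine}}$; conditioning on $\{u_{N-1},\mathcal Z_{N-1}\}$ is the coarser operation. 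Applying the smoothing theorem with the finer algebra $\mathcal Z_N$ placed in the inner expectation then yields $\E\{\chi \mid \mathcal A_{\mathrm{coarse}}\} = \E\{\E\{\chi\mid \mathcal A_{\mathrm{fine}}\}\mid \mathcal A_{\mathrm{coarse}}\}$, which is exactly the first claimed equality.

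For the second equality I would observe that the inner conditional expectation $\E\{x_N^\top Q x_N \mid \mathcal Z_N\}$ depends on the data $\mathcal Z_N$ only through the filtered density $\pi_{N\mid N}=p(x_N\mid\mathcal Z_N)$: indeed it equals $\int x_N^\top Q x_N\,\pi_{N\mid N}(x_N)\,\textrm{d}x_N$, a deterministic functional of the information state. This is precisely the quantity named $V_N(\pi_{N\mid N})$ in the line preceding the lemma, so substituting this identification into the inner expectation produces the second equality.

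The two steps are individually short; the only point demanding genuine care is the orientation of the $\sigma$-algebra inclusion — one must verify that $\{u_{N-1},\mathcal Z_{N-1}\}$ is coarser than $\mathcal Z_N$ (the two differ exactly by the measurement $y_N$, which is unavailable at the time $u_{N-1}$ is chosen) so that the tower property is invoked in the correct direction rather than trivially. The conceptual content worth flagging is the observation in the second step, that $\E\{\,\cdot\mid\mathcal Z_N\}$ of a function of $x_N$ collapses to a functional of $\pi_{N\mid N}$ alone; this is what legitimizes treating $V_N$ as a function of the information state, and it is the bridge that allows the stochastic dynamic programming expression in \eqref{eq:sDP_1} to be written in backward recursive form.
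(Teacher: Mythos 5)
Your proposal is correct and follows exactly the route the paper intends: the first equality is the smoothing theorem (Lemma~\ref{lemma:smoothing}) applied with the nested $\sigma$-algebras generated by $\{u_{N-1},\mathcal Z_{N-1}\}$ and $\mathcal Z_N$, and the second is the definition of $V_N(\pi_{N\mid N})$ given just above the corollary; the paper treats this as immediate and gives no separate proof. Your added checks (the $L^1$ hypothesis via Assumption~\ref{Assumption_BIBO} and the orientation of the $\sigma$-algebra inclusion) are sound and only make explicit what the paper leaves implicit.
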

Using the above result and \eqref{infoStateDynamics} in \eqref{eq:sDP_1}, we have 
\begin{align}
    &V_{N-1}(\pi_{N-1\mid N-1})= \nonumber \\
    &\hskip 7mm\min_{u_{N-1} \in \mathbb{U}} \E \Big \{x_{N-1}^\top Q x_{N-1} + u_{N-1}^\top R u_{N-1} + \nonumber \\
&\hskip 7mm \gamma V_N\left (T(\pi_{N-1\mid N-1},u_{N-1},y_N) \right) \mid u_{N-1}, \mathcal Z_{N-1} \Big\}. \label{eq:sDP_2}
\end{align}

This is the first backward iteration of the \textit{stochastic dynamic programming equation}, through which, a minimizing $u_{N-1}$ is assigned to each information state $\pi _ {N-1 \mid N-1}$. This step is repeated for all time-steps $N-2,\hdots,0$.

Solving the stochastic dynamic programming equation is computationally prohibitive, in general, primarily because of the infinite dimensionality of the information state. In the next section, we approximate the Bayesian filter by the EKF, reducing the information state into a finite-dimensional object.

\section{Methodology}
In this section we outline the EKF algorithm, and its ``wide sense'' (mean and covariance) approximation of the information state. We then adapt the cost in \eqref{StochasticOptimalControlCost} to the new approximate wide-sense information state. A few, mainly cosmetic, changes to this adapted cost are implemented to make it align with the assumptions/notation of the RL algorithm which will be outlined subsequently.

\subsection{EKF}
 We replace the infinite dimensional information state $\pi_{k \mid k}$ by a finite dimensional approximate one, namely, the state conditional mean vector $\hat x_{k \mid k}$ and covariance matrix $\Sigma_{k \mid k}$.

Let
\begin{align*}
    \hat x_{k \mid k} &= \E \left \{ x_k \mid \mathcal{Z}_k \right \}, \quad \hat x_{k \mid k-1} = \E \left \{ x_k \mid u_{k-1},\mathcal{Z}_{k-1}\right \},\\
    \Sigma_{k \mid k} &= \E \left \{ (x_k-\hat{x}_{k \mid k})(x_k-\hat{x}_{k \mid k})^\top \mid \mathcal{Z}_k \right \},\\
    \Sigma_{k \mid k-1} &= \E \left \{ (x_k-\hat{x}_{k \mid k-1})(x_k-\hat{x}_{k \mid k-1})^\top \mid u_{k-1},\mathcal{Z}_{k-1} \right \}.
\end{align*}
The EKF, similarly to the Bayesian filter, consists of the following two major steps:
\begin{itemize}
    \item Measurement-update
    \begin{align*}
        \hat x_{k\mid k}&=\hat x_{k \mid k-1} + L_k \left (y_k-h(\hat x_{k \mid k-1})\right),\\
        \Sigma_{k \mid k} &= \Sigma_{k \mid k-1} - L_k H_k \Sigma_{k \mid k-1}.
    \end{align*}
    \item Time-update
        \begin{align*}
        \hat x_{k+1\mid k}=f(\hat x_{k \mid k},u_k),\quad 
        \Sigma_{k+1 \mid k} = F_k\Sigma_{k \mid k}F_k^\top + \Sigma_w, 
    \end{align*}
\end{itemize}
which can be combined to write,
    \begin{align} \label{eq:EKF_stateDynamics}
        \hat x_{k+1\mid k+1}&=f(\hat x_{k \mid k},u_k) + L_{k+1} \left (y_{k+1}-g(\hat x_{k+1 \mid k})\right),\\
        \Sigma_{k+1 \mid k+1} &= \left (I - L_{k+1} H_{k+1} \right)\left (F_k\Sigma_{k \mid k}F_k^\top + \Sigma_w \right),
    \end{align}
where
\begin{align*}
 L_k&= \Sigma_{k \mid k-1} H_k^\top \left ( H_k \Sigma_{k \mid k-1} H_k^\top+\Sigma_v\right)^{-1}, \\
 F_k &= \left. \frac{\partial f(x,u_k)}{\partial x} \right | _{x=\hat x_{k\mid k}},\quad
 H_k = \left. \frac{\partial h(x)}{\partial x} \right | _{x=\hat x_{k\mid k-1}},
\end{align*}
and $\hat x_{0 \mid 0}$, $\Sigma_{0 \mid 0}$ are the initial state $x_0$ mean and covariance.

In general, the above conditional means and covariances are not exact; the state conditional densities $\pi_{k\mid k}$ are non-Gaussian due to the nonlinearity of $f$ and $g$. Hence, $\hat x_{k \mid k}$ and $\Sigma_{k \mid k}$ are merely approximations to the conditional mean and covariance of $x_k$ \cite{anderson2012optimal}. Define 
\begin{align}
    \hat \pi_{k+1} = \hat T ( \hat \pi_{k}, u_k, y_{k+1}), \label{eq:hatT}
\end{align}
where the tuple $\hat \pi_{k}=(\hat x_{k \mid k}, \Sigma_{k \mid k})$. Here $\hat T$ is a surrogate approximate mapping to $T$ in (\ref{infoStateDynamics}). The mapping $\hat T$ applies the steps \eqref{eq:EKF_stateDynamics} of the EKF to $\hat x_{k \mid k}, \Sigma_{k \mid k}$, using $u_k$ and $y_{k+1}$, and generates $\hat x_{k+1 \mid k+1}$ and $\Sigma_{k+1 \mid k+1}$.

\subsection{Cost}
The first two moments provided by the EKF are sufficient to evaluate the expectation of the finite-horizon cost function in \eqref{StochasticOptimalControlCost}, due to the quadratic stage costs.

We apply Lemma~\ref{lemma:smoothing} to the cost \eqref{StochasticOptimalControlCost}, conditioning on the information available at the time-step of each additive term. That is, we write each additive term as $\E \left \{ \E \{ x_k^\top Q x_k + u_k^\top R u_k \mid \mathcal{Z}_{k}\} \right \}$, per Lemma~\ref{lemma:smoothing}. If we then write $x_k = \hat x_{k \mid k} + \tilde x_k$, where $\E \{ \tilde x_k \mid \mathcal{Z}_{k}\} =0$, each additive term takes the form
\begin{align*}
    \E \left \{ \E \{ (\hat x_{k \mid k} + \tilde x_k)^\top Q (\hat x_{k \mid k} + \tilde x_k) + u_k^\top R u_k \mid \mathcal{Z}_{k}\} \right \}, \\ 
    =  \E \left \{  \hat x_{k \mid k}^\top Q \hat x_{k \mid k} + \trace(Q\Sigma_{k \mid k}) + u_k^\top R u_k \right \},
\end{align*}
after ignoring the zero mean cross-terms, and use the linearity of $\E$ and the circularity of the trace operator. Therefore we can re-write the cost \eqref{StochasticOptimalControlCost}
\begin{align}
J_N &= \E \Big \{ \gamma^N \left [ \hat x_{N\mid N}^\top Q x_{N\mid N} + \trace(Q \Sigma_{N \mid N}) \right ] + \nonumber \\
&\hskip -3mm\sum_{k=0}^{N-1} \gamma^k \left [\hat x_{k \mid k}^\top Q_k \hat x_{k \mid k} +   \trace(Q \Sigma_{k \mid k}) + u_k^\top R u_k\right ]\Big\},\label{eq:costWithTrace}
\end{align}
and the stochastic Dynamic Programming equation \eqref{eq:sDP_2} as
\begin{align}
    V_{N-1}(\pi_{N-1\mid N-1})&= \min_{u_{N-1} \in \mathbb{U}} \Big \{\hat x_{N-1\mid N-1}^\top Q \hat x_{N-1\mid N-1} + \nonumber\\
    &\hskip -20mm\trace(Q \Sigma_{N-1 \mid N-1}) + u_{N-1}^\top R u_{N-1} + \nonumber \\
&\hskip -20mm \gamma \E \left \{ V_N(T(\pi_{N-1\mid N-1},u_{N-1},y_N) \mid u_{N-1}, \mathcal Z_{N-1}\right \} \Big\}. \label{eq:sDP_3}
\end{align}
The following is a known result in SOC, which we present to show that minimizing \eqref{eq:costWithTrace} reduces to the LQG control in the linear case.
\begin{corollary}
(The separation principle \cite{aastrom2012introduction}): If the system \eqref{NonSys} is linear, the minimizing control law is $u_k = K_k \hat x_{k \mid k}$, where $K_k$ is the time-varying LQR gain of the deterministic (state fully observed and noise is zero) version of the problem and $\hat x_{k \mid k}$ is the conditional mean of the Kalman filter.
\end{corollary}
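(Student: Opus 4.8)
The plan is to specialize the apparatus already built to the linear case and to lean on the control-independence of the Kalman filter covariance. First I would set $f(x_k,u_k)=Fx_k+Gu_k$ and $h(x_k)=Hx_k$, so that the Jacobians become constant, $F_k\equiv F$ and $H_k\equiv H$, and the EKF recursion \eqref{eq:EKF_stateDynamics} collapses to the exact Kalman filter. In this regime $\hat x_{k\mid k}$ is the genuine conditional mean $\E\{x_k\mid\mathcal Z_k\}$, $\Sigma_{k\mid k}$ the genuine conditional covariance, and the rewriting \eqref{eq:costWithTrace} holds with equality rather than as an approximation, since only the first two conditional moments enter a quadratic cost.

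The decisive structural fact is that the covariance recursion $\Sigma_{k+1\mid k+1}=(I-L_{k+1}H)(F\Sigma_{k\mid k}F^\top+\Sigma_w)$ is driven neither by the realized measurements $y_k$ nor by the controls $u_k$; it evolves deterministically from $\Sigma_{0\mid 0}$. Consequently every $\trace(Q\Sigma_{k\mid k})$ appearing in \eqref{eq:costWithTrace} is a constant with respect to the minimization and may be discarded. What survives is a purely quadratic objective $\E\{\sum_{k}\gamma^k(\hat x_{k\mid k}^\top Q\hat x_{k\mid k}+u_k^\top Ru_k)\}$ (together with the analogous terminal term) in the mean trajectory alone.

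Next I would exhibit the mean dynamics $\hat x_{k+1\mid k+1}=F\hat x_{k\mid k}+Gu_k+L_{k+1}e_{k+1}$, with innovation $e_{k+1}=y_{k+1}-H\hat x_{k+1\mid k}$. The facts I would invoke are that the innovations form a zero-mean white sequence whose covariance is control-independent, and that $e_{k+1}$ is uncorrelated with the $\mathcal Z_k$-measurable quantities $\hat x_{k\mid k}$ and $u_k$. Thus the surrogate state $\hat x_{k\mid k}$ obeys a linear system driven by the control plus an exogenous, zero-mean, decoupled disturbance, i.e. a standard stochastic LQR. Running backward dynamic programming on \eqref{eq:sDP_3} with the ansatz $V_k(\hat x_{k\mid k})=\hat x_{k\mid k}^\top P_k\hat x_{k\mid k}+c_k$ and completing the square, the disturbance contributes only to the scalar offset $c_k$, leaving the quadratic part intact; the minimizer is $u_k=K_k\hat x_{k\mid k}$ with $K_k=-\gamma(R+\gamma G^\top P_{k+1}G)^{-1}G^\top P_{k+1}F$ and $P_k$ solving the discounted discrete Riccati recursion terminating at $P_N=Q$. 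Because this is exactly the recursion of the deterministic, fully observed, noise-free problem, $K_k$ is the time-varying LQR gain, which proves the claim.

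I expect the crux to be the careful verification that the innovation decouples in the dynamic programming step: that $\E\{(F\hat x_{k\mid k}+Gu_k)^\top P_{k+1}L_{k+1}e_{k+1}\mid\mathcal Z_k\}=0$ and that $\E\{e_{k+1}^\top L_{k+1}^\top P_{k+1}L_{k+1}e_{k+1}\}$ does not depend on the control. This orthogonality is precisely where linearity (and Gaussianity, needed for $\hat x_{k\mid k}$ to be the exact mean) is indispensable, and it is the mechanism underlying the certainty-equivalence character of the result; everything else is the standard LQR completion of squares.
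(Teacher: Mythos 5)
Your proposal is correct and rests on exactly the same key observation as the paper's proof: in the linear case $\Sigma_{k\mid k}$ evolves independently of the controls (and measurements), so the $\trace(Q\Sigma_{k\mid k})$ terms are constants that can be dropped, reducing the problem to a standard LQR in the conditional mean. The paper states this in two lines and defers the remaining LQR argument to the cited reference, whereas you carry out the innovations representation and the completion-of-squares explicitly; the substance is the same.
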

\begin{proof}
This is true since in the linear case $\Sigma_{k \mid k}$ evolves independently of $u_0,\hdots,u_k$, hence, can be omitted when minimizing the cost \eqref{eq:costWithTrace} for the control law. This results in the standard deterministic LQR problem.
\end{proof}

In the nonlinear case however, $\Sigma_{k \mid k}$'s evolution depends on the jacobians $F_k,\,H_k$, which in turn depend on $u_k$ and $\hat x_{k \mid k}$. Therefore, a SOC takes into consideration regulating the state uncertainty $\Sigma_{k \mid k}$ to achieve system and filter stability, and minimize the cost \eqref{eq:costWithTrace}.

\subsection{ Deterministic Policy Gradient}
Among the numerous algorithms within the scope of RL, we opt for the Deterministic Policy Gradient (DPG) algorithm \cite{silver2014deterministic}, in particular, its deep neural net implementation (DDPG) \cite{lillicrap2015continuous}. While we find this algorithm convenient within the context of this paper---primarily due to its ability to handle continuous state-action spaces---our choice does not impose strong preferences on the selection of other RL algorithms.

We first show that according to our assumptions and formulation, the stage-costs in the cost function \eqref{eq:costWithTrace} are bounded (by Assumption~\ref{Assumption_BIBO} and assuming uniform observability \cite{reif1999stochastic}), and hence the cost is bounded as $N \to \infty$ (sandwiched by a geometric series), i.e. $\lim_{N \to \infty} J_N \to J_\infty < \infty$. Furthermore, its minimum corresponds to the fixed point solution of \eqref{eq:sDP_3}, i.e., when $V_{N-1} = V_{N} = V_\infty$ \cite[ch.~7]{bertsekas2012dynamic}.

Having a well-defined cost and value functions over $N \to \infty$ is important as a huge portion of RL algorithms considers the infinite-horizon case. Another straightforward adaptation required is that the common notation in RL is to maximize the value, compared to minimizing the cost in optimal control. By simply defining the reward signal as the negative of the stage-cost in \eqref{eq:costWithTrace},
\begin{align*}
    r(\hat \pi_k,u_k) = - \E \left \{ \hat x_{k \mid k}^\top Q \hat x_{k \mid k} +   \trace(Q \Sigma_{k \mid k}) + u_k^\top R_k u_k \right \},
\end{align*}
which, as discussed above, is bounded for all $k$.

Analogous to \eqref{eq:sDP_3}, the state-action value function $\mathcal Q$ \cite{bertsekas2012dynamic},
\begin{align*}
    \mathcal Q(\hat \pi_k,u_k) =  r(\hat \pi_k, u_k) + \gamma\E \left(  \mathcal Q(\hat T(\hat \pi_k, u_k, y_{k+1}) \right),
\end{align*}
which its existence is immediate from the existence of $V_\infty$. We use a neural network control policy $u_k = \mu_\theta(\hat \pi_k)$, where $\theta$ denotes its weights and biases. Writing the reward representation of the cost \eqref{eq:costWithTrace} (its negative),
\begin{align*}
    J_\theta (\hat \pi_0) &=\E \Bigg \{\sum_{k=0}^\infty \gamma^k r(\hat \pi_k,\mu_\theta(\hat \pi_k)\Bigg\} = - J_\infty.
\end{align*}
The policy gradient theorems seek to find a description of the gradient $\nabla_\theta J_\theta$ which is convenient for computation.
We now present the Deterministic Policy Gradient Theorem of \cite{silver2014deterministic}, adapted for the case of SOC.
\begin{theorem}
(Deterministic Policy Gradient for SOC): Under the formulation in Section~\ref{section: ProblemFormulation}, Assumption~\ref{Assumption_BIBO}, filter stability and uniform observability (boundedness of \eqref{eq:EKF_stateDynamics}), the following identity holds true:
\begin{align*}
\nabla_{\theta} J_\theta = \mathbb{E}\Big[\nabla_{u} \mathcal Q(\hat \pi, u) \nabla_{\theta} \mu_\theta(\hat \pi)\Big].
\end{align*}
\end{theorem}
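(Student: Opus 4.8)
The plan is to follow the structure of the deterministic policy gradient proof of \cite{silver2014deterministic}, treating the EKF information state $\hat\pi$ as the Markov ``state'', the mapping $\hat T$ of \eqref{eq:hatT} as the stochastic transition driven by the measurement $y_{k+1}$, and the deterministic policy $\mu_\theta$ as the actor. I introduce the on-policy value $V^\mu(\hat\pi) := \mathcal Q(\hat\pi,\mu_\theta(\hat\pi))$, which by the definition of $\mathcal Q$ obeys the Bellman recursion
\[
V^\mu(\hat\pi) = r(\hat\pi,\mu_\theta(\hat\pi)) + \gamma\,\mathbb E_y\big[V^\mu(\hat T(\hat\pi,\mu_\theta(\hat\pi),y))\big],
\]
the expectation being over the next measurement $y$ given $\hat\pi$. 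Consistently, $\mathcal Q(\hat\pi,u) = r(\hat\pi,u) + \gamma\,\mathbb E_y[V^\mu(\hat T(\hat\pi,u,y))]$, so that $\nabla_u\mathcal Q$ differentiates through the explicit action argument only.

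First I would differentiate this recursion with respect to $\theta$. The right-hand side depends on $\theta$ in two distinct ways: explicitly through the current action $u=\mu_\theta(\hat\pi)$, which enters both $r$ and the transition $\hat T$; and implicitly through the reappearance of $\mu_\theta$ at the successor state inside $V^\mu(\hat\pi')$, where $\hat\pi' = \hat T(\hat\pi,\mu_\theta(\hat\pi),y)$. Applying the chain rule and collecting every term carrying the current-state Jacobian $\nabla_\theta\mu_\theta(\hat\pi)$, the coefficient multiplying it is exactly $\nabla_u r + \gamma\,\mathbb E_y[\nabla_u\hat T\,\nabla_{\hat\pi'}V^\mu(\hat\pi')] = \nabla_u\mathcal Q(\hat\pi,u)\big|_{u=\mu_\theta(\hat\pi)}$, while the remaining, purely downstream, contribution is $\gamma\,\mathbb E_y[\nabla_\theta V^\mu(\hat\pi')]$. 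This produces the one-step recursion
\[
\nabla_\theta V^\mu(\hat\pi) = \nabla_u\mathcal Q(\hat\pi,u)\big|_{u=\mu_\theta(\hat\pi)}\,\nabla_\theta\mu_\theta(\hat\pi) + \gamma\,\mathbb E_y\big[\nabla_\theta V^\mu(\hat\pi')\big].
\]

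Next I would unroll this recursion along the trajectory $\hat\pi_0,\hat\pi_1,\dots$ generated by $\hat T$ under $\mu_\theta$. Iterating and using the tower property of conditional expectation, the downstream term telescopes into a discounted sum, giving
\[
\nabla_\theta V^\mu(\hat\pi_0) = \sum_{k=0}^\infty \gamma^k\,\mathbb E\big[\nabla_u\mathcal Q(\hat\pi_k,u)\big|_{u=\mu_\theta(\hat\pi_k)}\,\nabla_\theta\mu_\theta(\hat\pi_k)\big].
\]
Since $J_\theta(\hat\pi_0) = V^\mu(\hat\pi_0)$, taking the expectation over the initial information state and identifying the $\gamma$-discounted occupancy measure of $\{\hat\pi_k\}$ with the distribution against which the outer $\mathbb E$ of the statement is taken delivers the claimed identity $\nabla_\theta J_\theta = \mathbb E[\nabla_u\mathcal Q(\hat\pi,u)\,\nabla_\theta\mu_\theta(\hat\pi)]$.

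The main obstacle is analytic rather than algebraic: justifying the three interchanges used above, namely differentiation under the measurement expectation $\mathbb E_y$, the exchange of $\nabla_\theta$ with the infinite sum, and the absolute convergence of the unrolled series. This is precisely where the hypotheses enter. Assumption~\ref{Assumption_BIBO} confines the state to the compact invariant set $\mathbb X$, while filter stability and uniform observability bound $\Sigma_{k\mid k}$, so that the EKF map $\hat T$, the reward $r$, and their first derivatives are uniformly bounded; differentiability of $\hat T$ in $u$ follows from $f$ being twice differentiable in $u$, $h$ differentiable, and $\Sigma_v,\Sigma_w\succ0$ rendering the gain $L_k$ a smooth function of its arguments, and $\mu_\theta,\nabla_\theta\mu_\theta$ are smooth by construction of the network. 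These uniform bounds furnish an integrable dominating function, so dominated convergence legitimizes differentiating under $\mathbb E_y$, and the geometric weight $\gamma^k$ guarantees both the convergence of the series and the validity of exchanging $\nabla_\theta$ with $\sum_k$. I would therefore carry out these regularity verifications first, as they underpin every formal manipulation above.
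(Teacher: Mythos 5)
Your proposal is correct, but it takes a more self-contained route than the paper. The paper's proof is essentially a verification exercise: it checks that the formulation of Section~\ref{section: ProblemFormulation}, Assumption~\ref{Assumption_BIBO}, filter stability, and uniform observability imply the four regularity conditions of \cite[Appendix~A]{silver2014deterministic} --- differentiability of the induced transition density $p(\hat\pi_{k+1}\mid\hat\pi_k,u_k)$ in $u_k$, smoothness of the neural-net policy, differentiability of the quadratic reward, and boundedness of everything on the compact set $\mathbb X\times\mathbb U$ --- and then invokes the cited theorem wholesale. You instead re-derive the deterministic policy gradient identity from scratch: differentiating the Bellman recursion for $V^\mu(\hat\pi)=\mathcal Q(\hat\pi,\mu_\theta(\hat\pi))$, isolating the coefficient of $\nabla_\theta\mu_\theta(\hat\pi)$ as $\nabla_u\mathcal Q$, unrolling into a $\gamma$-discounted sum, and identifying the discounted occupancy measure; only afterwards do you address the analytic interchanges, and the hypotheses you invoke there (compact invariant set from Assumption~\ref{Assumption_BIBO}, bounded $\Sigma_{k\mid k}$ from filter stability and uniform observability, smoothness of $\hat T$ from the smoothness of $f$, $h$ and positive definiteness of $\Sigma_v$, $\Sigma_w$) are exactly the ones the paper uses to certify conditions (i)--(iv). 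The paper's approach buys brevity by delegating the mechanism to the citation; yours buys transparency by making explicit where each hypothesis is actually consumed (dominated convergence under $\mathbb E_y$, exchange of $\nabla_\theta$ with the infinite sum, absolute convergence of the series), which the paper leaves implicit inside the reference. Neither contains a gap; if you finish by executing the regularity verifications you sketch in your final paragraph, your argument subsumes the paper's.
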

\begin{proof}
By the formulation in Section~\ref{section: ProblemFormulation} and the hypothesis above: (i) the transition density $ p(\hat \pi_{k+1} \mid \hat \pi_k,u_k)$, induced by the dynamics \eqref{eq:hatT}, is twice differentiable in $u_k$, and continuous in $\pi_{k+1}$ and $\pi_{k}$, (ii) the policy, being a neural net, is twice differentiable with respect to its parameters (for most activation functions), (iii) the reward function $r$, being quadratic, is differentiable in all of its arguments, (iv) the reward and the transition densities, being continuous, themselves and their jacobians in $u_k$, in $\hat \pi_k$ and $u_k$ over the compact set $\mathbb X \times \mathbb U$, are bounded. The points (i)-(iv) imply the regularity assumptions in \cite[Appendix~A]{silver2014deterministic}, and hence the result.
\end{proof}

In DDPG \cite{lillicrap2015continuous}, $\mathcal Q$ is also approximated by a neural net $\mathcal Q_{\psi}$ with parameters $\psi$ updated via temporal difference methods. While the above gradient is used to update the control policy neural net $\mu_\theta$. Algorithm~\ref{algorithm:ALRL} is the DDPG algorithm adapted for the information state (instead of the state). It does not include the target networks as in \cite{lillicrap2015continuous}, which can be augmented to Algorithm~\ref{algorithm:ALRL} to improve learning stability.

\begin{algorithm}[h]
\caption{Actively Learning RL via DDPG}\label{algorithm:ALRL}
\begin{algorithmic}[0]
\State Randomly initialize the weights $\theta,\,\psi$ of the neural nets $\mathcal Q_{\psi}$ and $\mu_\theta$;
\State Initialize replay buffer $\mathcal{R}$;
\For{$\text{episode}=1,2,\hdots$}
\State Randomly sample an initial information state $\hat \pi_0=\{\hat x_0, \Sigma_0\}$ and a true state $x_0$;
\For{$k=0,\hdots,N-1$}
\State Sample control actions $u_k=\mu_\theta(\hat \pi_k) + \eta$ where $\eta$ is an exploration noise;
\State Apply $u_k$ in \eqref{NonSys_a} to sample the true $x_{k+1}$;
\State Using $x_{k+1}$ in \eqref{NonSys_b}, sample the true $y_{k+1}$;
\State Using $\hat \pi_k$, $u_k$ and $y_{k+1}$, evaluate $\hat \pi_{k+1}$ using \eqref{eq:hatT};
\State Calculate the reward $r(\hat \pi_k, u_k)$;
\State Store the tuple $(\hat \pi_k, u_k, r(\hat \pi_k,u_k), \hat \pi_{k+1})$ in $\mathcal{R}$;
\State Sample a minibatch $\{(\hat \pi_i, u_i, r(\hat \pi_i,u_i), \hat \pi_{i+1}),\, i =1,\hdots,M\}$ of $\mathcal{R}$;
\State Set $z_i = r(\hat \pi_i,u_i) + \gamma \mathcal Q_\psi(\hat \pi_{i+1},\mu_\theta(\hat \pi_{i+1}))$;
\State Update the critic network $\mathcal Q_\psi$ by minimizing the loss $\frac{1}{M} \sum_{i=1}^M \left(z_i - \mathcal Q_\psi(\hat \pi_i, u_i)\right)^2$, w.r.t. $\psi$;
\State Update the policy network $\mu_\theta$ using the sample average policy gradient
$$ \nabla_\theta J_\theta \approx \frac{1}{M} \sum_{i=0}^M \nabla_u \mathcal Q_\psi(\hat \pi_i, u) \mid_{u=\mu_\theta(\hat{\pi}_i)}\nabla_\theta\mu_\theta(\hat \pi_i);$$
\EndFor
\EndFor
\end{algorithmic}
\end{algorithm}

\begin{remark}
The information state $\hat \pi_{k}$ contains repeated elements, since $\Sigma_{k \mid k}$ is symmetric \cite{anderson2012optimal}. We consider the upper triangle only in Algorithm~\ref{algorithm:ALRL}.
\end{remark}

\section{Numerical example} \label{Section: Numerical Examples}
In this section we implement Algorithm~\ref{algorithm:ALRL} on a system with varying state observability over the state-space $\R^3$. This simple example, although $3-$dimensional, but $9-$dimensional in the information state in \eqref{eq:hatT} ($3$ for the states and $6$ for the lower (or upper) triangle of the state covariance matrix), therefore, prohibitive for Dynamic Programming, and results in a complicated $(9+1)-$dimensional nonlinear program for Model Predictive Control. Instead, we demonstrate here how Algorithm~\ref{algorithm:ALRL} can be used to obtain a stochastic control. We compare the resulting closed-loop behavior to that obtained via LQG (LQG denotes an LQR with an EKF (in this example)).

Consider the model
\begin{align*}
    x_{k+1} &= 
    \begin{bmatrix}
    .92 & .2 & -.1\\
    0 & .95 & -.3\\
    0 & 0 & .93
    \end{bmatrix}
    x_k + 
    \begin{bmatrix}
    0 \\
    0 \\
    1
    \end{bmatrix}
    u_k + w_k,\\
    y_k &=\frac{1}{27} \text{ELU}((x_k{(1)})^3) + v_k,
\end{align*}
where $x_k(1)$ is the first entry of $x_k$, $w_k$ and $v_k$ obey the assumptions listed under \eqref{NonSys}, and moreover, $w_k \sim \mathcal{N}(0, \Sigma_w)$ and $w_k \sim \mathcal{N}(0, \Sigma_v)$\footnote{The notation $\mathcal{N}(\mu,\Sigma)$ denotes a Gaussian density with mean vector $\mu$ and covariance matrix $\Sigma$.}, where $\Sigma_v = 0.2$, $\Sigma_w = 0.5 \mathbb{I}_{3 \times 3}$. The ELU function is the exponential linear unit, an activation function heavily used in the deep learning context. It behaves as $x$ when $x>0$ and ELU$(x) \approx -1$ for $x<0$. More precisely, it is $e^x-1,\,x<0$ and $x,\,x\geq0$.

The choice of this system is due to its proliferation in system identification literature, as it belongs to the emerging Hammerstein–Wiener models, or linear dynamics composed with algebraic nonlinearities \cite{wills2013identification}. More importantly, this function admits variable $x_k$ to $y_k$ sensitivity, which in turn affects the observability of the system. For $x_k(1) < 0$, this sensitivity vanishes, and $x_k$ is no longer observables by the means of $y_k$. Therefore, it is the duty of a stochastic controller to take this into consideration: frequently seeking $x_k(1)>0$ for better observability (information gathering). Such a model can be effective in modeling deteriorating sensors quality or signal-to-noise ratio in a certain region of the state-space.

We use $Q=\mathbb I_{3 \times 3}$ and $R = 1$, and a discount factor $\gamma = 0.95$. The constraints are $u_k \in \mathbb U = [-5,5]$, which we enforce by using a saturated parameterized policy. 

Deterministic policy gradient methods \cite{silver2014deterministic} use the actor-critic learning architecture: the actor being the control policy, and the critic is its corresponding policy evaluation in the shape of an action-value function. In this example, both networks are feedforward, with one hidden layer of size $64$. The actor network receives nine inputs: the information state elements $\hat x _{k \mid k}$ and (the lower triangle of) $\Sigma_{k \mid k}\}$. The output of this network is the control $u_k$. The critic network, approximating the action value function, takes ten input entries: both of the information state elements as well as the corresponding control action $u_k$, and it outputs $\mathcal Q_\psi(\hat \pi _{k}, u_k)$. We use mini-batch learning, with batches of size $64$ of tuples $(\hat \pi_{k \mid k}, u_k, r_k, \hat T (\hat \pi_{k},u_k,y_{k+1}))$, and with learning rate $10^{-3}$. Figure~\ref{fig: Example1_reward} shows the statistics of the normalized accumulative reward of $50$ different training trials, all started from different randomized initial weights. The figure also shows the trial we picked to generate the subsequent closed-loop results. These $50$ training trials altogether consumed about $75$ minutes in computation, relying on an \textsc{NVIDIA V-100 GPU}\footnote{The results of this example can be reproduced using our open-source \textsc{Python/Pytorch} code: {\href{https://github.com/msramada/Active-Learning-Reinforcement-Learning}{https://github.com/msramada/Active-Learning-Reinforcement-Learning}}}.

\begin{figure}[h]
\centering 
\includegraphics{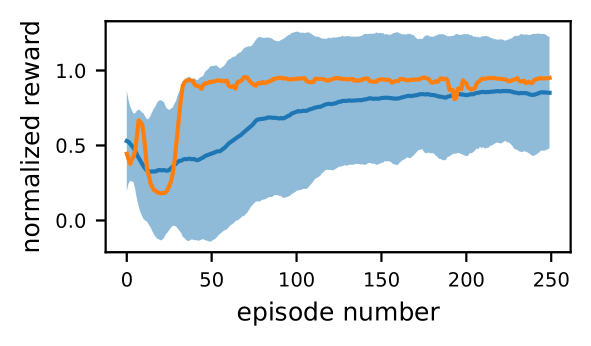} \caption{The average reward of a $50$ different runs of Algorithm~\ref{algorithm:ALRL} is shown in dark blue, while the shaded area is the corresponding two standard deviations about the average. In orange is the run with the highest terminal accumulative reward, which its corresponding controller is used to generate the closed-loop results below.\label{fig: Example1_reward}} \vskip-5mm
\end{figure}
\begin{figure}[h]
\centering 
\includegraphics{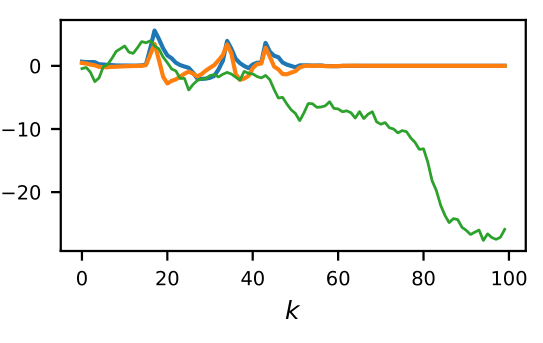}
\includegraphics{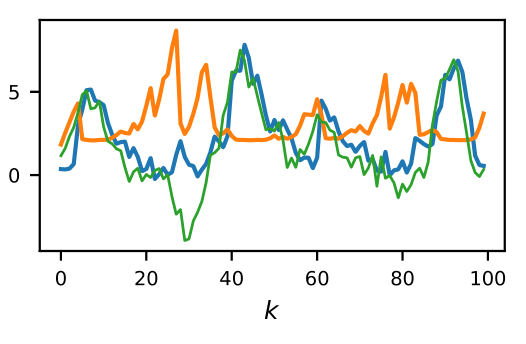} \caption{\textbf{LQG (upper) vs RL dual control (lower):} For each figure, the vertical axis is the magnitude of: the mean $\hat x_{k \mid k}(1)$ and $\trace(\Sigma_{k \mid k})$ which are shown in dark blue and orange, respectively, and the true state $x_k(1)$ shown in green.\label{fig: Example1_LQR_RL}}
\end{figure}

An LQG control is first applied: $u_k = K \hat x_{k \mid k}$, where $\hat x_{k \mid k}$ is provided by the EKF and $K$ is the LQR gain of the deterministic version of the system. The result of this LQG control is shown on the top of Figure~\ref{fig: Example1_LQR_RL}, which displays the true state deviation and \textit{filter divergence} \cite{anderson2012optimal,jazwinski2007stochastic}. This divergence is caused mainly by the LQG controller insisting on driving the state to the origin, making the system vulnerable to loss of observability if the true state $x_k(1)$ escapes to its negative side. This issue has been handled by the RL control resulting from our approach. In the bottom of Figure~\ref{fig: Example1_LQR_RL}, it can be seen that our controller does not prioritize only driving the state estimate to the origin, but also deliberately seeking some level of observability by continuously pushing $x_k(1)$ towards the positive side, whenever the uncertainty measure $\trace(\Sigma_{k \mid k})$ starts rising.

\section{Conclusion}
The presented framework is to produce an RL agent with attributes from SOC, namely, caution (to ensure safety and performance) and probing (to keep up active learning and information gathering). Our approach is built on the uniform observability and filter stability assumptions which are typically satisfied by the EKF. In general, however, what qualifies as a ``sufficient'' approximation to the information state is a rather complicated question, and one might be required to adapt our derivations to a different Bayesian filter (e.g. a Gaussian mixture), if the EKF is not sufficient.

In addition to the uncertainty propagator, our future work is aimed at crafting the reward signal to generate a different caution vs probing balance: for instance, prioritizing filter stability (or its accuracy) by including the true ($\lVert x_k - \hat x_{k \mid k} \rVert$) estimation error or by further penalizing the state covariance term in the reward signal.
\bibliographystyle{IEEEtranS}
\bibliography{References}

\end{document}